\documentclass[11pt]{article}

\usepackage{acl}

\usepackage{times}
\usepackage{booktabs}
\usepackage{balance}
\usepackage{multirow}
\usepackage{colortbl}
\usepackage{xcolor}
\usepackage{latexsym}
\usepackage{booktabs} 
\usepackage{algorithm}
\usepackage{amsmath}    
\usepackage{amssymb}    
\usepackage{algorithm}
\usepackage{algpseudocode}
\usepackage{amsthm}

\newtheorem{lemma}{Lemma}[section]

\usepackage{booktabs}
\usepackage{xcolor}
\usepackage{colortbl} 
\definecolor{firstplace}{HTML}{FFF2CC}  
\definecolor{secondplace}{HTML}{EFEFEF} 

\usepackage[T1]{fontenc}

\usepackage[utf8]{inputenc}

\usepackage{microtype}

\usepackage{inconsolata}

\usepackage{graphicx}

\title{\textsc{SkillC}: Learning Autonomous Skill Internalization in LLM Agents via Contrastive Credit Assignment}

\author{Hongxiang Lin, Zhirui Kuai, Erpeng Xue\thanks{Corresponding author.}, Lei Wang \\
  Meituan\\
\texttt{linhx0@hotmail.com} \quad \texttt{\{kuaizhirui, wanglei46, xueerpeng\}@meituan.com}}

\begin{document}
\maketitle

\begin{abstract}


Structured skill prompts improve exploration in long-horizon agentic reinforcement learning (RL). Skill-augmented RL methods retain external skills at inference, while skill-internalization RL methods withdraw them during training to enable autonomous performance. However, existing internalization approaches only use skill-helpfulness contrast for curriculum control, leaving the policy update unchanged and unable to distinguish skill-dependent from autonomous success.
We propose \textsc{SkillC}, a framework based on Contrastive Skill Credit Assignment (CSCA) that converts this contrast into a direct learning signal for internalization. \textsc{SkillC} samples paired skill-injected and skill-free rollouts for tasks from active skill types within the same policy update, and injects their task-level contrast into optimization via a dual-stream advantage estimator that preserves global ranking while applying a one-sided correction toward skill-free success. A smoothed validation-level signal further drives an adaptive curriculum over attribution strength, rollout allocation, and monotonic active-set pruning.
Experiments on ALFWorld and WebShop show that, without runtime skill access, \textsc{SkillC} surpasses the strongest prior skill-internalization RL baseline by 5.5\% and 4.4\%, respectively, while remaining competitive with skill-augmented RL methods.

\end{abstract}

\section{Introduction}

\begin{figure}[!t]
    \centering
    \includegraphics[width=\columnwidth]{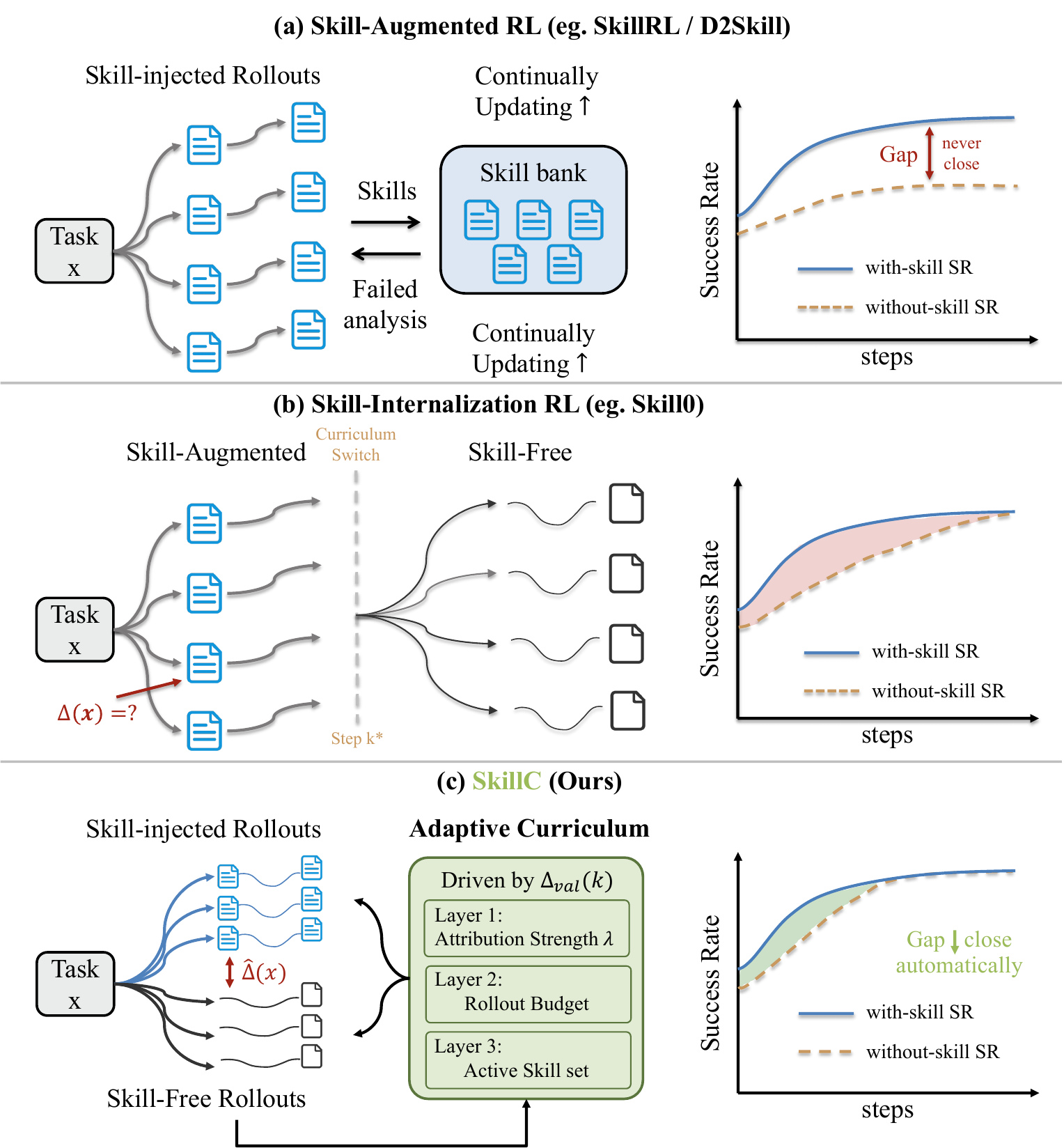}
    \caption{
        Comparison of paradigms for skill use in Agentic RL. (a) Skill-augmented RL methods keep skills available at inference time and optimize performance with runtime skill support. (b) Skill-internalization RL methods withdraw skills during training and may estimate skill helpfulness for control, but leave task-level credit assignment unchanged. (c) \textsc{SkillC} turns helpfulness comparison into contrastive credit assignment for autonomous success, while adapting attribution strength, rollout allocation, and the active skill set through a validation-driven curriculum.
    }
    \label{fig:paradigms}
\end{figure}

Structured skill prompts \cite{xu2026agent,zhang2026memskill,liu2026well} have become a practical mechanism for training long-horizon LLM agents. By providing reusable procedural guidance during interaction, they improve exploration and sample efficiency in agentic reinforcement learning (RL) \cite{li2026organizing,li2026arise,su2026skill,cho2026skillret}. Recent work has advanced this paradigm through skill augmentation and co-evolving skill banks \cite{li2026arise}, skill distillation \cite{wang2026skill,xia2026skillrl}, and skill lifecycle management \cite{pu2026skillops,ouyang2026skillos,shen2026dynamic}. Yet most of these approaches remain in the skill-augmented regime shown in Figure~\ref{fig:paradigms}(a): skills are treated as external runtime support, and autonomous performance without retrieval is not the optimization target.

A more ambitious goal is skill internalization, illustrated in Figure~\ref{fig:paradigms}(b), where skills are used only during training and gradually withdrawn so that the final policy can act autonomously. \textsc{Skill0}~\cite{lu2026skill0} is a representative example. Current internalization methods can already estimate whether a skill remains useful and use that information to control withdrawal. What remains unchanged, however, is the policy objective. Within each GRPO update \cite{shao2024deepseekmath}, all rollouts for a task are still generated under a single prompt condition, so optimization reinforces success under the current regime rather than autonomy after withdrawal. The training loop may know that a skill is still globally useful, yet it cannot determine whether a particular successful trajectory is already independent of that skill.

We term this optimization mismatch \emph{internalization blindness}. To address it, we propose \textsc{SkillC}, a framework based on Contrastive Skill Credit Assignment (CSCA) that converts this contrast into a direct learning signal for internalization. For tasks from active skill types, \textsc{SkillC} samples paired skill-injected and skill-free rollouts within the same policy update, injecting their task-level contrast into optimization via a dual-stream advantage estimator that preserves global ranking while applying one-sided correction toward skill-free success. A smoothed validation-level counterpart of the same contrastive quantity further drives a hybrid adaptive curriculum. For multi-category environments like ALFWorld, this curriculum jointly controls attribution strength, rollout allocation, and monotone active-set pruning; for single-skill environments like WebShop, it controls attribution strength and rollout allocation uniformly. Together, these mechanisms yield the contrastive internalization regime illustrated in Figure~\ref{fig:paradigms}(c).

Our main contributions are as follows:
\begin{itemize}
    \item We identify trajectory-level credit assignment limitations in fixed-schedule skill internalization where all rollouts share a single prompt condition, preventing the gradient from distinguishing autonomous versus skill-dependent success. We propose \textsc{SkillC} with paired contrastive rollouts and adaptive curriculum scheduling to address this.

    \item We implement CSCA in \textsc{SkillC} through three coordinated mechanisms. Paired skill-injected and skill-free rollouts expose task-level internalization gaps, a dual-stream advantage estimator with per-condition normalization redirects credit toward autonomous success, and a validation-driven curriculum adapts attribution strength and rollout allocation.

    \item Critical characterization of applicability reveals CSCA's effectiveness is signal-quality-dependent. Strong contrastive signals enable substantial gains (Heat $+10.5\%$, Cool $+11.1\%$, Pick2 $+33.4\%$), while weak signals on ceiling-effect tasks cause regression (Pick $-11.5\%$), and single-skill environments yield only marginal improvements (WebShop $+3.1\%$).
    
\end{itemize}
\section{Related Work}

\subsection{Agentic Reinforcement Learning}
Reinforcement learning has become a key post-training paradigm for language model agents. GRPO \cite{shao2024deepseekmath} introduced critic-free policy optimization using group-relative advantages. Building on this foundation, agentic RL methods address agent-specific challenges through hierarchical RL for long-horizon credit assignment \cite{li2026arise}, self-evolution mechanisms for policy co-adaptation \cite{fan2026evolving, shi2026skill1}, experience reuse for continuous adaptation \cite{chhikara2025mem0, liu2026simplemem}, and prompt-based reasoning like ReAct \cite{yao2022react} and Reflexion \cite{shinn2023reflexion}. However, a key challenge remains: how to leverage intermediate guidance during training while ensuring the final policy acquires genuine autonomous competence rather than guidance-dependent behaviors.

\subsection{Skill Libraries for Agents}

Structured skills have become a core mechanism for extending agent capabilities beyond parametric memory. Early work recognized that raw trajectory retrieval incurs substantial token overhead and noise, motivating abstraction into reusable procedural knowledge \cite{xia2026skillrl, lu2026skill0}. Recent skill-based systems span multiple dimensions: hierarchical skill organization and discovery through reflection \cite{tu2026dynamic, wang2026skillx}, dependency-aware retrieval via graph structures \cite{li2026graph}, and dynamic lifecycle management including retention and retirement \cite{shen2026dynamic,pu2026skillops}. For skill selection, approaches range from frozen embedding-based retrieval to learned routing and policy-driven ranking \cite{xia2026skillrl, cho2026skillret}. For utilization, RL-based methods condition policies on selected skills with hierarchical reward structures \cite{tu2026dynamic, shi2026skill1}. For distillation, methods span prompt-based extraction and self-reflection to co-evolving skill generators \cite{xia2026skillrl, wang2026skill}. However, existing skill-augmented RL methods do not jointly optimize all three stages with unified learning signals, and existing skill internalization approaches progressively withdraw skills according to curriculum schedules but leave the policy update itself unchanged during training. Within each gradient step, all rollouts share the same prompt condition, preventing the gradient from observing which behaviors are already autonomous versus skill-dependent. This creates an optimization mismatch we term internalization blindness.

\section{Method}

\begin{figure*}[t]
    \centering
    \includegraphics[width=\textwidth]{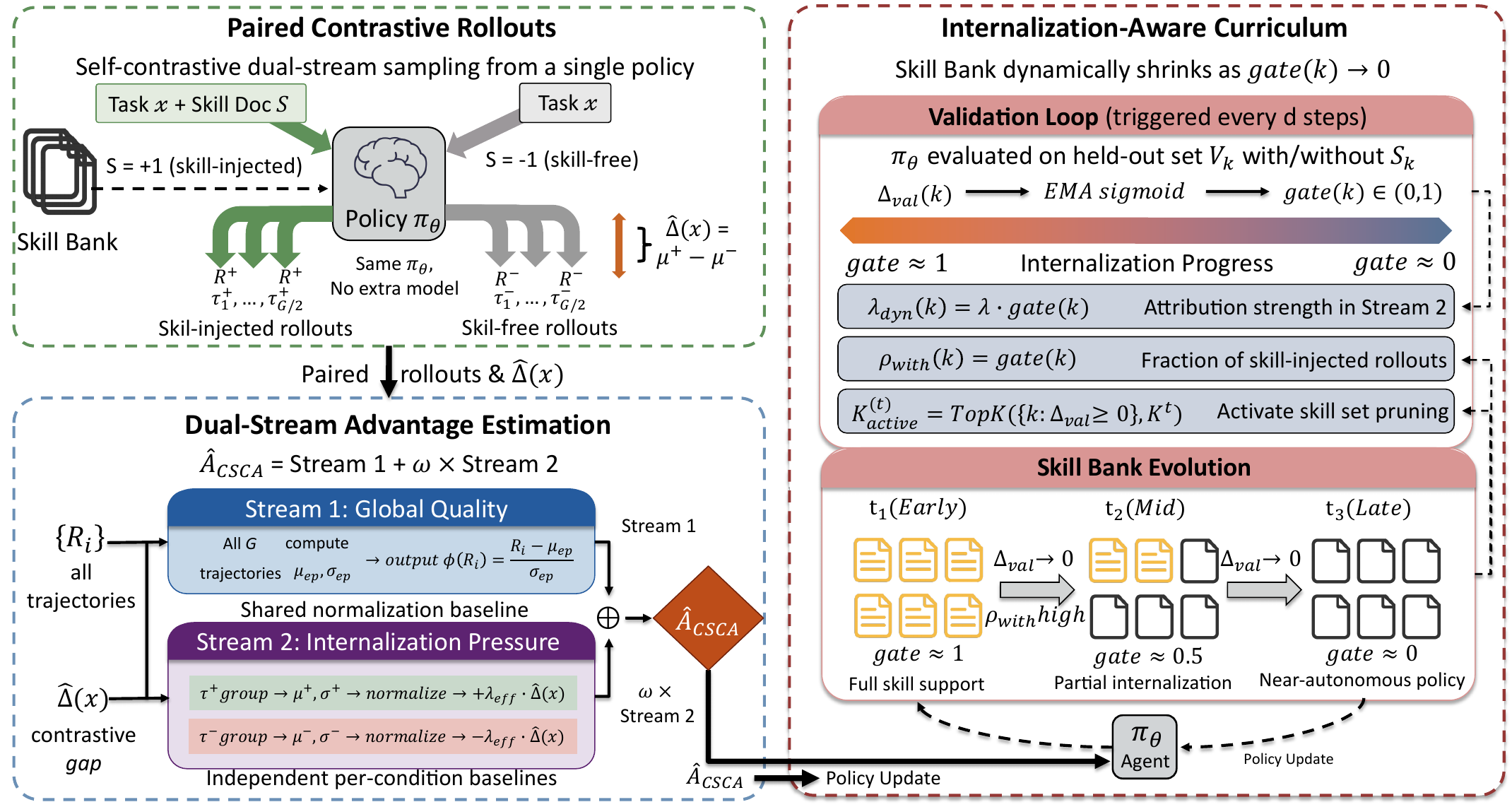}
    \caption{
    Overview of \textsc{SkillC}. Paired contrastive rollouts expose residual skill dependence for each task, a dual-stream advantage estimator redirects credit toward autonomous success without mixed-normalization bias, and a internalization-aware  curriculum adapts attribution strength, rollout allocation, and active skill set.
    }
    \label{fig:method}
\end{figure*}

We propose \textsc{SkillC}, addressing a key limitation in skill internalization: within each policy update, standard GRPO jointly normalizes skill-injected and skill-free trajectories, inflating the reward baseline when skill-injected rollouts dominate and suppressing credit for autonomous success. \textsc{SkillC} resolves this through three mechanisms: (1) sampling paired skill-injected and skill-free rollouts to measure task-level internalization gaps, (2) using per-condition advantage normalization to redirect credit toward autonomous behaviors, and (3) adapting the curriculum via validation-level contrastive signals to balance attribution strength, rollout allocation, and skill retention.

\subsection{Problem Formulation}
\label{sec:formulation}

For a task $x \sim \mathcal{X}$, we maintain a fixed skill bank $\mathcal{B}=\{\mathcal{S}_k\}_k$ and an active task-type set $\mathcal{K}_{\mathrm{active}}^{(t)}$. Let $z\in\{0,1\}$ denote the prompt condition: $z=1$ for skill-injected rollouts and $z=0$ for skill-free. For active task types $k(x)\in\mathcal{K}_{\mathrm{active}}^{(t)}$, we sample paired rollouts $\tau^+\sim\pi_\theta(\cdot\mid x,\mathcal{S}_{k(x)})$ and $\tau^-\sim\pi_\theta(\cdot\mid x)$. For retired types, only $\tau^-$ is sampled. Both use the same policy parameters, yielding self-contrastive design.

\paragraph{Task-Level Contrastive Gap.}
Unlike skill-augmented methods, \textsc{SkillC} targets without-skill success at inference. The task-level contrastive gap is
\begin{equation}
  \Delta(x)
  =
  \mathbb{E}[R \mid x, z=1;\pi_\theta]
  -
  \mathbb{E}[R \mid x, z=0;\pi_\theta],
\end{equation}
where $z\in\{0,1\}$ indicates skill injection. Positive $\Delta(x)$ indicates skill dependence; $\Delta(x)\approx 0$ indicates internalization. Standard GRPO cannot observe this signal because all rollouts share one prompt condition. \textsc{SkillC} exposes it through paired rollouts.

\subsection{Paired Contrastive Rollouts}
\label{sec:paired}

\textsc{SkillC} enforces paired rollouts for every active task type. At each step $t$, the curriculum gate value $\mathrm{gate}^{(t)}(k)$ is converted into a skill-injected rollout fraction:
\begin{equation}
  \rho_\mathrm{with}^{(t)}(k)
  =
  \rho_{\min}
  +
  (1-2\rho_{\min})[2\mathrm{gate}^{(t)}(k)-1]_+.
\end{equation}
This mapping reduces skill injection from dominant early-training levels to a small probing budget $\rho_{\min}$ as internalization progresses.

For each active task $x$, we sample $G$ rollouts and assign $n_+^{(t)}(x)=\mathrm{clip}(\operatorname{round}(G\rho_\mathrm{with}^{(t)}(k(x))),1,G-1)$ to the skill-injected branch, with the remainder to skill-free. Let $\mu^+(x)$ and $\mu^-(x)$ denote the mean rewards of the two branches. The batch-level contrastive signal is
\begin{equation}
  \hat{\Delta}(x)
  =
  \mathrm{clip}\!\Bigl(
    \mathrm{EMA}_{\alpha_{\mathrm{batch}}}\!\bigl(\mu^+(x)-\mu^-(x)\bigr),
    -\delta,\,
    \delta
  \Bigr),
\end{equation}
where $\delta$ is a clipping bound. Unlike utility-based skill scoring or reward shaping, this contrastive signal is used exclusively for credit assignment and curriculum control; it is never converted into skill bonuses or trajectory rewards. For retired task types, $\hat{\Delta}(x)=0$ by convention.

\subsection{Dual-Stream Advantage Estimation}
\label{sec:dual}

\textsc{SkillC} injects this signal via a dual-stream advantage estimator. When skills are beneficial, skill-injected trajectories achieve higher rewards. Standard joint normalization inflates the baseline, suppressing advantages for successful skill-free trajectories precisely when internalization is most critical. To address this, \textsc{SkillC} decomposes the advantage into two complementary streams:
\begin{equation}
  \hat{A}_\mathrm{CSCA}(i,x)
  =
  \Phi(R_i,\mathcal{R}_{\mathrm{batch}})
  +
  \omega \cdot A^\mathrm{cond}_\mathrm{contra}(i,x),
\end{equation}
where Stream 1 preserves global ranking via $\Phi(R_i,\mathcal{R}_{\mathrm{batch}})=(R_i-\mu_{\mathrm{ep}})/(\sigma_{\mathrm{ep}}+\epsilon)$, and the correction magnitude is
\begin{equation}
  C(x)=\lambda_\mathrm{eff}(x)[\hat{\Delta}(x)]_+,
\end{equation}
where $\lambda_\mathrm{eff}(x)=\lambda_\mathrm{dyn}^{(t)}(k(x)) /(1+\beta_\mathrm{freq}n_x)$ scales the signal by internalization progress and task frequency.

Stream 2 performs condition-wise normalization and reallocates credit according to the contrastive signal:
\begin{equation}
  A^\mathrm{cond}_\mathrm{contra}(i,x)
  =
  \begin{cases}
  \displaystyle \frac{R_i-\mu^+(x)}{\sigma^+(x)+\epsilon} - C(x), & z_i=1, \\[8pt]
  \displaystyle \frac{R_i-\mu^-(x)}{\sigma^-(x)+\epsilon} + C(x), & z_i=0.
  \end{cases}
\end{equation}

When $\hat{\Delta}(x)>0$ (skill is still helpful), Stream 2 downweights skill-dependent trajectories (subtracting $C(x)$) while upweighting skill-free successes (adding $C(x)$), encoding the learning objective: "succeed without skill support." This signal is transient by design: as internalization progresses and $\hat{\Delta}(x)$ narrows, $C(x)\to 0$ and Stream 2 degenerates to stratified normalization, a well-established variance-reduction technique. When $\hat{\Delta}(x)\le 0$, skills are no longer helpful and the correction is clipped to zero; the curriculum handles skill retirement. This architecture avoids mixed-normalization bias while maintaining interpretability and computational efficiency.

\subsection{Internalization-Aware Curriculum}
\label{sec:curriculum}

\textsc{SkillC} employs the contrastive signal at two timescales. The advantage estimator uses batch-level $\hat{\Delta}(x)$ for fine-grained per-task credit assignment, while the curriculum uses a validation-level counterpart for coarser-grained withdrawal control. This separation is important because per-batch estimates are noisy and optimized for gradient direction, whereas curriculum decisions require stability.

For each task type $k$, we periodically (every $d$ training steps) compute a validation contrastive gap and smooth it across checkpoints:
\begin{equation}
\begin{aligned}
  \Delta_\mathrm{val}^{(t)}(k)
  &=
  \mathrm{SR}_{\mathrm{with}}^{(t)}(k)
  -
  \mathrm{SR}_{\mathrm{without}}^{(t)}(k), \\
  \bar{\Delta}_\mathrm{val}^{(t)}(k)
  &=
  \alpha_{\mathrm{val}}\bar{\Delta}_\mathrm{val}^{(t-1)}(k)
  +
  (1-\alpha_{\mathrm{val}})\Delta_\mathrm{val}^{(t)}(k), \\
  \mathrm{gate}^{(t)}(k)
  &=
  \sigma\!\left(\bar{\Delta}_\mathrm{val}^{(t)}(k)/T\right).
\end{aligned}
\end{equation}
The gate controls three mechanisms: (i) attribution strength $\lambda_\mathrm{dyn}^{(t)}(k)=\lambda\,\mathrm{gate}^{(t)}(k)$, (ii) rollout fraction $\rho_\mathrm{with}^{(t)}(k)$, and (iii) active-set pruning for multi-category environments:
\begin{equation}
\begin{aligned}
  \mathcal{K}_{\mathrm{keep}}^{(t)}
  &=
  \left\{
    k \in \mathcal{K}_{\mathrm{active}}^{(t-1)}
    :
    \bar{\Delta}_\mathrm{val}^{(t)}(k) \ge \tau_\mathrm{retire}
  \right\}, \\
  \mathcal{K}_{\mathrm{active}}^{(t)}
  &=
  \operatorname{TopK}_{\bar{\Delta}_\mathrm{val}^{(t)}}
  \left(
    \mathcal{K}_{\mathrm{keep}}^{(t)},
    K^{(t)}
  \right).
\end{aligned}
\end{equation}
For single-skill environments, the active set is constant and only mechanisms (i) and (ii) apply. Once a task is retired, it never re-enters, maintaining monotone structure and avoiding oscillatory behavior near zero.

\subsection{Training Objective}
\label{sec:objective}

Training follows standard GRPO with PPO-style clipping and KL regularization, with the vanilla group-normalized advantage replaced by $\hat{A}_\mathrm{CSCA}$:
\begin{equation}
\begin{aligned}
  J(\theta)
  &=
  \mathbb{E}_{x,\{\tau_i\}}\!\left[
    \frac{1}{G}\sum_{i=1}^{G}
    \frac{1}{T_i}\sum_{t=1}^{T_i}
    \ell_{i,t}(\theta)
  \right], \\
  \ell_{i,t}(\theta)
  &=
  \mathcal{L}_\mathrm{clip}\!\left(
    \rho_{i,t},
    \hat{A}_\mathrm{CSCA}(i,x)
  \right)
  -
  \beta\,\mathrm{KL}_{i,t},
\end{aligned}
\end{equation}
where $\rho_{i,t}=\pi_\theta(a_{i,t}\mid h_{i,t})/\pi_{\theta_{\mathrm{old}}}(a_{i,t}\mid h_{i,t})$ and $\mathrm{KL}_{i,t}=D_\mathrm{KL}(\pi_\theta(\cdot\mid h_{i,t})\,\|\,\pi_\mathrm{ref}(\cdot\mid h_{i,t}))$.

\textsc{SkillC} modifies credit assignment within a standard policy gradient framework. It jointly optimizes skill-injected and skill-free trajectories, with the contrastive term redirecting credit toward behaviors that succeed autonomously. However, effectiveness depends critically on signal informativeness: strong contrastive signals enable effective credit redistribution and responsive curriculum adaptation, while weak signals on high-performance tasks can cause premature skill retirement and performance degradation.

\section{Experiments}
\label{sec:experiments}
\begin{table*}[t]
\centering

\setlength{\tabcolsep}{4.7pt}
\renewcommand{\arraystretch}{0.4} 

\begin{tabular}{lccccccccc}
\toprule
 & \multicolumn{7}{c}{\textbf{ALFWorld}} & \multicolumn{2}{c}{\textbf{WebShop}} \\
\cmidrule(lr){2-8} \cmidrule(lr){9-10}
\textbf{Method} & \textit{Pick} & \textit{Look} & \textit{Clean} & \textit{Heat} & \textit{Cool} & \textit{Pick2} & \textit{Avg.} & \textit{Score} & \textit{Succ.} \\
\midrule

\multicolumn{10}{c}{\textit{Training-free Methods}} \\
\midrule
Zero-Shot & 33.4 & 21.6 & 19.3 & 6.9 & 2.8 & 3.2 & 14.8 & 26.4 & 7.8 \\
ReAct \cite{yao2022react} & 48.5 & 35.4 & 34.3 & 13.2 & 18.2 & \cellcolor{secondplace}17.6 & 31.2 & 46.2 & 19.5 \\
Reflexion \cite{shinn2023reflexion} & \cellcolor{firstplace}62.0 & 41.6 & \cellcolor{secondplace}44.9 & 30.9 & \cellcolor{secondplace}36.3 & \cellcolor{firstplace}23.8 & \cellcolor{secondplace}42.7 & \cellcolor{firstplace}58.1 & \cellcolor{firstplace}28.8 \\
Mem0 \cite{chhikara2025mem0} & \cellcolor{secondplace}54.0 & \cellcolor{secondplace}55.0 & 26.9 & \cellcolor{secondplace}36.4 & 20.8 & 7.7 & 33.6 & 23.9 & 2.0 \\
ExpeL \cite{zhao2024expel} & 21.0 & \cellcolor{firstplace}67.0 & \cellcolor{firstplace}55.0 & \cellcolor{firstplace}52.0 & \cellcolor{firstplace}71.0 & 6.0 & \cellcolor{firstplace}46.3 & \cellcolor{secondplace}30.9 & \cellcolor{secondplace}11.2 \\
\midrule

\multicolumn{10}{c}{\textit{Direct RL Methods}} \\
\midrule
PPO \cite{schulman2017proximal} & \cellcolor{secondplace}92.3 & 64.0 & \cellcolor{secondplace}92.5 & \cellcolor{firstplace}89.5 & \cellcolor{secondplace}80.3 & \cellcolor{secondplace}68.8 & \cellcolor{secondplace}80.4 & \cellcolor{secondplace}81.4 & \cellcolor{secondplace}68.7 \\
RLOO \cite{ahmadian2024back} & 87.6 & \cellcolor{secondplace}78.2 & 87.3 & 81.3 & 71.9 & 48.9 & 75.5 & 80.3 & 65.7 \\
GRPO \cite{shao2024deepseekmath} & 90.8 & 66.1 & 89.3 & 74.7 & 72.5 & 64.7 & 77.6 & 79.3 & 66.1 \\
GiGPO \cite{feng2026group} & \cellcolor{firstplace}97.7 & \cellcolor{firstplace}82.7 & \cellcolor{firstplace}98.8 & \cellcolor{secondplace}83.7 & \cellcolor{firstplace}89.3 & \cellcolor{firstplace}79.2 & \cellcolor{firstplace}90.8 & \cellcolor{firstplace}84.4 & \cellcolor{firstplace}72.8 \\
\midrule

\multicolumn{10}{c}{\textit{On-Policy Self-Distillation Methods}} \\
\midrule
OPSD \cite{zhao2026self} & 50.0 & 60.0 & 22.7 & 21.4 & 17.6 & 9.5 & 32.8 & 4.5 & 2.3 \\
GRPO+OPSD \cite{zhao2026self} & 91.4 & 61.5 & \cellcolor{firstplace}100.0 & \cellcolor{secondplace}87.5 & \cellcolor{secondplace}76.5 & 52.2 & 80.4 & 86.8 & 76.5 \\
Skill-SD \cite{wang2026skill} & 93.9 & \cellcolor{firstplace}93.8 & 90.9 & \cellcolor{firstplace}100.0 & 69.2 & \cellcolor{secondplace}68.4 & \cellcolor{secondplace}85.1 & 86.1 & 76.5 \\
RLSD \cite{yang2026self} & \cellcolor{firstplace}100.0 & \cellcolor{secondplace}87.5 & \cellcolor{secondplace}92.3 & 58.8 & \cellcolor{firstplace}80.0 & 65.2 & 82.0 & \cellcolor{secondplace}87.4 & \cellcolor{secondplace}77.3 \\
SDAR \cite{lu2026self} & \cellcolor{secondplace}94.7 & 75.0 & \cellcolor{firstplace}100.0 & 86.7 & 68.2 & \cellcolor{firstplace}78.9 & \cellcolor{firstplace}85.9 & \cellcolor{firstplace}89.4 & \cellcolor{firstplace}82.8 \\
\midrule

\multicolumn{10}{c}{\textit{Skill-Augmented RL Methods}} \\
\midrule
EvolveR \cite{wu2025evolver} & 64.9 & 33.3 & 46.4 & 13.3 & 33.3 & 33.3 & 43.8 & 42.5 & 17.6 \\
\textsc{SkillRL} \cite{xia2026skillrl} & \cellcolor{secondplace}97.9 & 71.4 & 90.0 & 90.0 & 95.5 & 87.5 & 89.9 & 85.2 & 72.7 \\
RetroAgent \cite{zhang2026retroagent} & \cellcolor{secondplace}97.9 & 90.9 & \cellcolor{firstplace}99.2 & 92.9 & 85.3 & \cellcolor{secondplace}91.0 & \cellcolor{secondplace}94.9 & 88.9 & \cellcolor{secondplace}82.3 \\
Skill1 \cite{shi2026skill1} & \cellcolor{firstplace}100.0 & \cellcolor{secondplace}98.6 & \cellcolor{secondplace}97.3 & \cellcolor{firstplace}99.2 & \cellcolor{secondplace}96.1 & \cellcolor{firstplace}96.0 & \cellcolor{firstplace}97.5 & \cellcolor{secondplace}89.7 & \cellcolor{firstplace}82.9 \\
D2Skill \cite{tu2026dynamic} & 97.1 & \cellcolor{firstplace}100.0 & 75.0 & 87.5 & \cellcolor{firstplace}100.0 & 78.6 & 90.6 & \cellcolor{firstplace}91.1 & 80.5 \\
\midrule

\multicolumn{10}{c}{\textit{Skill-Internalization RL Methods}} \\
\midrule
\textsc{Skill0} \cite{lu2026skill0} & \cellcolor{firstplace}100.0 & \cellcolor{firstplace}84.6 & \cellcolor{secondplace}91.1 & \cellcolor{secondplace}84.2 & \cellcolor{secondplace}88.9 & \cellcolor{secondplace}55.5 & \cellcolor{secondplace}85.9 & \cellcolor{secondplace}83.3 & \cellcolor{secondplace}70.9 \\

\textsc{SkillC} &\cellcolor{secondplace} 88.5 & \cellcolor{secondplace}78.6 & \cellcolor{firstplace}91.2 & \cellcolor{firstplace}94.7 & \cellcolor{firstplace}100.0 & \cellcolor{firstplace}88.9 & \cellcolor{firstplace}90.6 & \cellcolor{firstplace}85.6 & \cellcolor{firstplace}74.0 \\
\bottomrule

\end{tabular}
\caption{Main results on ALFWorld and WebShop (Success Rate, \%). Within each category, cells with a yellow background denote the best results, and a grey background denotes the second-best results.}
\label{tab:main_results}
\end{table*}

\subsection{Experimental Setup}
\label{sec:setup}

\paragraph{Environments.}
We evaluate \textsc{SkillC} on two representative agent benchmarks.
ALFWorld \cite{shridhar2020alfworld} is a text-based household environment
aligned with the ALFRED embodied benchmark, where agents complete long-horizon tasks
through multi-step planning and object interaction across six task categories: Pick, Look, Clean, Heat, Cool, and Pick2.
WebShop \cite{yao2022webshop} is a realistic online shopping environment
requiring multi-step search, browsing, and purchase decisions under natural-language instructions.

\paragraph{Baselines.}
We compare \textsc{SkillC} against representative methods from five categories.
Training-free methods include ReAct~\citep{yao2022react},
Reflexion~\citep{shinn2023reflexion}, Mem0~\citep{chhikara2025mem0},
and ExpeL~\citep{zhao2024expel}.
Direct RL methods include PPO~\citep{schulman2017proximal},
RLOO~\citep{ahmadian2024back}, GRPO~\citep{shao2024deepseekmath},
and GiGPO~\citep{feng2026group}.
On-policy self-distillation methods include
OPSD~\citep{zhao2026self},
Skill-SD~\citep{wang2026skill}, RLSD~\citep{yang2026self},
and SDAR~\citep{lu2026self}.
Skill-augmented RL methods, including EvolveR~\citep{wu2025evolver}, 
\textsc{SkillRL}~\citep{xia2026skillrl}, RetroAgent~\citep{zhang2026retroagent}, 
Skill1~\citep{shi2026skill1}, and D2Skill~\citep{tu2026dynamic}, 
all retain external skill access at inference time.
Skill-internalization RL methods, including \textsc{Skill0}~\citep{lu2026skill0} 
and our \textsc{SkillC}, are evaluated \emph{without} external skills at inference time.

\paragraph{Implementation Details and Metrics.}

All experiments use Qwen2.5-7B-Instruct~\cite{qwen2025} as the base model, and the full hyperparameters are in Appendix~\ref{app:hyperparams}.
For ALFWorld, we use the six task-category skills from \textsc{Skill0}: Pick, Look, Clean, Heat, Cool, and Pick2.
For WebShop, we construct a single shopping skill covering search, browsing, and purchase strategies specific to the e-commerce domain. The skills used in both environments are obtained using the skill learning prompts proposed in \textsc{Skill0}\cite{lu2026skill0}.
We report with-skill success rate ($\mathrm{SR}_{\mathrm{with}}$) for skill-augmented methods and without-skill success rate ($\mathrm{SR}_{\mathrm{without}}$) for skill-internalization methods on the test split. For ALFWorld, we additionally track the internalization gap $\Delta_{\mathrm{val}} = \mathrm{SR}_{\mathrm{with}} - \mathrm{SR}_{\mathrm{without}}$ on the validation split to measure progress during training. For WebShop, we report both task score (0--100, measuring attribute relevance) and success rate (\%, binary completion). All reported metrics represent the average of 5 independent runs.


\subsection{Main Results}
\label{sec:main_results}

Table~\ref{tab:main_results} reports test results on ALFWorld and WebShop.
\textsc{SkillC} achieves the strongest \emph{without-skill} performance among all
skill-internalized methods, improving over \textsc{Skill0} from $85.9\%$ to
$90.6\%$ on ALFWorld and from $70.9\%$ to $74.0\%$ on WebShop.
The improvement is attributable to resolving internalization blindness: by making
the task-level contrastive gap $\hat{\Delta}(x)$ observable within each policy
update, CSCA redirects credit toward skill-free success throughout training,
rather than relying on the policy to transfer behaviors passively after
curriculum withdrawal.
On ALFWorld, \textsc{SkillC} also matches D2Skill and outperforms \textsc{SkillRL},
both of which retain external skill access at inference time.
D2Skill also samples paired rollouts within the same update, but uses the
performance gap as a hindsight bonus for skill-injected trajectories under
mixed normalization, reinforcing runtime skill use rather than internalization.
Matching its performance without skill access confirms that redirecting the
same contrastive signal toward skill-free credit assignment is sufficient.

\subsection{Ablation Study}
\label{sec:ablation}

Table~\ref{tab:ablation} evaluates the contribution of \textsc{SkillC}'s main components.

\paragraph{Paired rollouts.}
Without pairing, each task is sampled under only one prompt condition per update,
so the task-level contrastive gap $\hat{\Delta}(x)$ is unavailable.
Performance drops to 87.3\,\%, showing that within-update comparison is the main
source of internalization credit.

\paragraph{Dual-stream normalization.}
Replacing per-condition normalization with a mixed baseline lowers performance to
88.4\,\%. When with-skill and without-skill trajectories share one normalization
reference, high-reward skill-injected rollouts inflate the baseline and suppress
credit to successful skill-free trajectories.

\paragraph{Curriculum.}
Curriculum ablations are consistently worse than the full model, but their effect is
smaller than removing pairing or dual-stream credit assignment. Fixing the curriculum
gate reduces SR to 88.9\,\%, removing pruning gives 89.5\,\%, and fixing
$\rho_{\mathrm{with}}{=}0.5$ gives 89.8\,\%. The worst variant, high init score,
reaches only 87.1\,\%, showing that an overly optimistic prior delays correction.
Overall, the validation signal is most useful when it jointly modulates attribution
strength, rollout allocation, and retirement.

\subsection{Internalization Schedule Analysis}
\label{sec:mechanism}

\begin{table}[!t]
\centering
\renewcommand{\arraystretch}{0.4}
\setlength{\tabcolsep}{4pt}

\begin{tabular}{@{}p{5.6cm} r r@{}}
\toprule
\textbf{Variant} & \textbf{SR} & $\boldsymbol{\Delta}$ \\
\midrule
single-condition updates (no pairing)              & 87.3 & $-3.3$ \\
w/ mixed norm                      & 88.4 & $-2.2$ \\
fixed curriculum gate ($= 0.5$)                    & 88.9 & $-1.7$ \\
w/o active-skill pruning                           & 89.5 & $-1.1$ \\
fixed $\rho_{\mathrm{with}}{=}0.5$                & 89.8 & $-0.8$ \\
high init score ($\bar{\Delta}^{(0)}_{\mathrm{val}}{=}0.3$) & 87.1 & $-3.5$ \\
\midrule
\rowcolor{firstplace}
\textbf{\textsc{SkillC}} & \textbf{90.6} & \textbf{ref.} \\
\bottomrule
\end{tabular}
\caption{%
  Ablation on ALFWorld.
  \textit{SR}: without-skill success rate on the test set.
  $\Delta$: gap relative to full \textsc{SkillC}.%
}
\label{tab:ablation}
\end{table}
\begin{figure*}[t]
\centering
\includegraphics[width=0.95\textwidth]{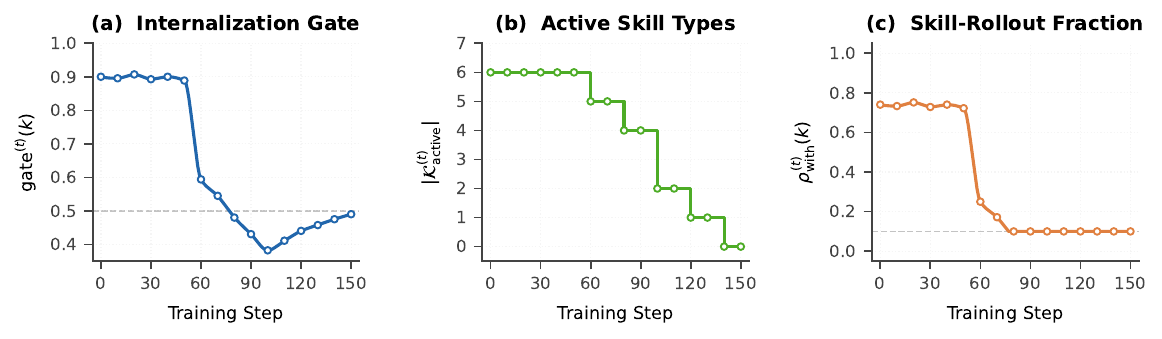}
\caption{%
  Internalization schedule during CSCA training on ALFWorld.
  (a) Internalization gate tracking the smoothed validation gap across all task types.
  (b) Active skill set size showing monotone pruning from six to zero task categories.
  (c) Skill-rollout fraction decreasing as internalization progresses.%
}
\label{fig:csca_dynamics}
\end{figure*}
\begin{figure*}[!t]
\centering
\includegraphics[width=0.95\textwidth]{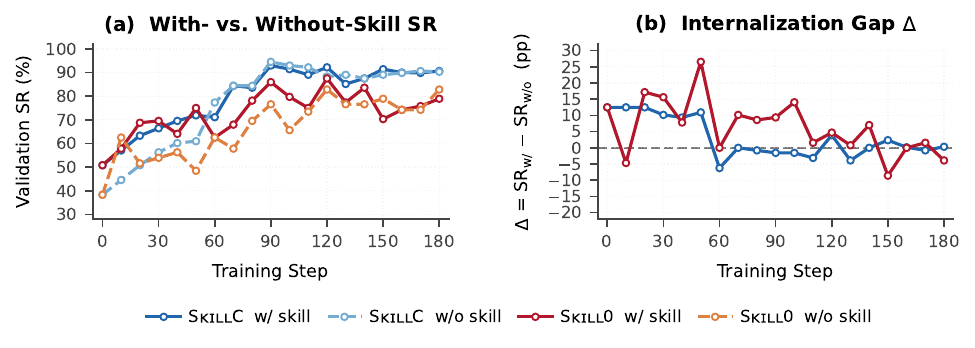}
\caption{%
  Training dynamics on ALFWorld.
  (a) With-skill and without-skill validation success rates over training steps.
  (b) Internalization gap over training steps.%
}
\label{fig:training_dynamics}
\end{figure*}

Figure~\ref{fig:csca_dynamics} traces the internal CSCA signals to confirm that
the gap closure observed in Figure~\ref{fig:training_dynamics} is driven by the
mechanism rather than by incidental factors.
During early training the gate remains near high values, indicating that
skill prompts are still strongly beneficial; at this stage \textsc{SkillC}
follows essentially the same trajectory as \textsc{Skill0}, which rules out
a favorable initialization as the cause of later divergence.
As training progresses, $\bar{\Delta}_{\mathrm{val}}^{(t)}$ declines as the
policy begins to internalize skills, pulling the gate below $0.5$ and
triggering a staggered retirement sequence at irregular intervals that
reflect per-type heterogeneity in internalization speed.
Each retirement immediately reduces the rollout fraction, which is
mechanically tied to the gate, so no manual schedule is required.
By contrast, \textsc{Skill0} withdraws skills on a fixed budget preset before
training; without a live contrastive signal its withdrawal timing is
decoupled from actual internalization progress, and the gap between the
with-skill and without-skill streams persists until the schedule forces
retirement.
The three-way synchronization between the gate, the active set, and the
rollout fraction confirms that \textsc{SkillC}'s internalization schedule
is wholly data-driven rather than externally imposed.

\begin{figure*}[!t]
\centering
\includegraphics[width=0.9\textwidth]{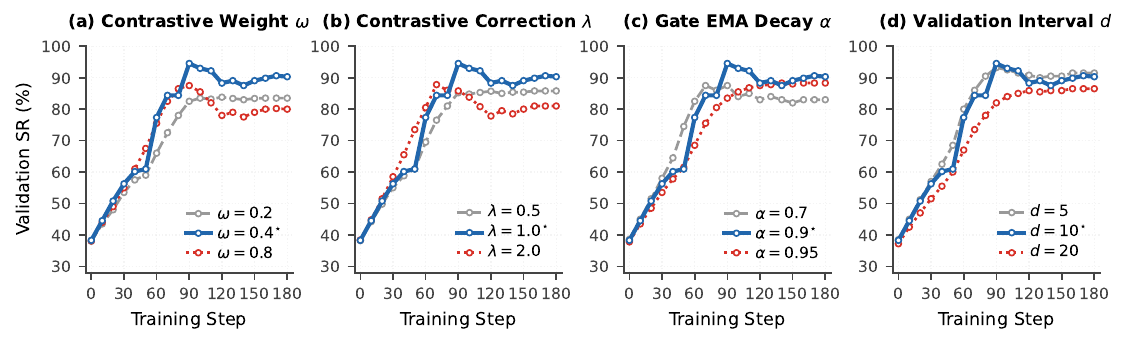}
\caption{Hyperparameter sensitivity on ALFWorld and
  each panel varies one CSCA hyperparameter.}
\label{fig:sensitivity}
\end{figure*}



\subsection{Training Dynamics and Analysis}
\label{sec:analysis}


Figure~\ref{fig:training_dynamics} illustrates the central problem that motivates
\textsc{SkillC}.
For \textsc{Skill0}, the with-skill validation SR consistently exceeds
the without-skill SR by a substantial margin throughout training,
with no sign of spontaneous convergence.
Because \textsc{Skill0} always samples all rollouts under the same prompt condition,
the GRPO objective never observes the contrastive signal $\Delta_{\mathrm{val}}$
and therefore cannot steer the policy toward autonomous success.
The gap at the curriculum switch point is particularly stark: internalization
debt accumulated during the skill-injection phase is exposed all at once when
skills are withdrawn.
This sustained gap is precisely what we term \emph{internalization blindness}.

Under \textsc{SkillC}, the internalization gap starts large at the beginning,
declines steadily through the early-middle phase,
crosses zero during mid-training, and stabilizes slightly below zero in later stages.
This monotonic decline reflects the fact that CSCA provides the policy with a
continuous training signal for autonomous success throughout the entire training
run, rather than concentrating improvement at a discrete curriculum switch.
As shown in Figure~\ref{fig:csca_dynamics}, the internalization gate,
active-set size, and skill-rollout fraction all decrease in lockstep
with the gap, confirming that the internalization schedule responds directly
to measured contrastive evidence rather than a preset timetable.
Once the gap stabilizes below zero, \textsc{SkillC} reaches its best without-skill
validation performance and skill retirement is complete, providing a natural
stopping criterion for internalization-oriented training.

\subsection{Hyperparameter Analysis}
\label{sec:sensitivity}

Figure~\ref{fig:sensitivity} summarizes the effect of four key CSCA hyperparameters while fixing the others at their default values ($\omega{=}0.4$, $\lambda{=}1.0$, $\alpha{=}0.9$, $d{=}10$). The default curve (marked $\star$) is aligned with the \textsc{SkillC} without-skill trajectory in Figure~\ref{fig:training_dynamics} and achieves the highest validation performance. Moving away from this point consistently hurts performance in an interpretable way: smaller $\omega$ or $\lambda$ weakens the contrastive signal and leads to flatter convergence around 85\,\%, whereas larger values improve the early phase but introduce clear late-stage instability, ending near 80\,\%. For the curriculum controller, a smaller $\alpha$ reacts too aggressively to short-term validation noise and drops from an 87.5\,\% peak to about 83\,\%, while a larger $\alpha$ updates too conservatively and tops out around 88.3\,\%. Similarly, more frequent validation ($d{=}5$) stays close to the default but does not surpass it despite higher overhead, whereas less frequent validation ($d{=}20$) slows adaptation and caps performance near 86.5\,\%. Overall, the figure suggests that \textsc{SkillC} is reasonably stable under nearby hyperparameter changes, with the default configuration providing the best trade-off across all four axes.
\begin{table}[!t]
\centering
\renewcommand{\arraystretch}{0.4}
\setlength{\tabcolsep}{2pt}
\begin{tabular}{@{}l r r r r r r@{}}
\toprule
 & \multicolumn{3}{c}{\textbf{Time / Step (s)}} & \multicolumn{3}{c}{\textbf{Active Skills}} \\
\cmidrule(lr){2-4}\cmidrule(lr){5-7}
\textbf{Method} & 50 & 100 & 150 & 50 & 100 & 150 \\
\midrule
GRPO (no skill)  &  941 &  921 &  908 & \multicolumn{3}{c}{---} \\
\textsc{Skill0}    & 1078 & 1061 & 1035 & 6 & 4 & 2 \\
\midrule
\textbf{\textsc{SkillC}}    & 1187 & 1132 & 1095 & 6 & 2 & 0 \\
\bottomrule
\end{tabular}
\caption{Computational cost on ALFWorld. Time/Step is median wall-clock seconds per training step. Active Skills counts remaining task types in the curriculum.}
\label{tab:cost}
\end{table}

\subsection{Computational Cost}

All timings use 8 A100-80GB GPUs with batch size 8 and group size $G=8$. GRPO (skill-free) serves as baseline. \textsc{Skill0} adds static six-skill context, yielding $+14\%$ overhead at step 20. \textsc{SkillC} incurs $+26\%$ early overhead due to paired rollouts, but as skills retire (6 to 0 active skills), the rollout budget decreases and overhead stabilizes at $+20\%$ over \textsc{Skill0}. The increased generation time at later steps reflects improved policy capability rather than algorithmic cost. Total extra compute is approximately $+30\%$ relative to GRPO.
\section{Conclusion}

We address internalization blindness through paired contrastive rollouts and dual-stream advantage estimation, redirecting task-level performance gaps into internalization credit assignment without distorting the optimization target. By reallocating trajectories under condition-wise normalization, we train skill-free success from the start rather than relying on passive transfer after schedule-driven withdrawal. \textsc{SkillC} improves over prior work by 4.7\% on ALFWorld and 3.1\% on WebShop, with full data-driven curriculum control. Future work includes dynamic skill discovery, and generalization to hierarchical skill structures.

\section*{Limitations}

\textsc{SkillC} relies on several assumptions that may limit its applicability:

\paragraph{Skill inventory structure.} The method assumes fixed, task-aligned skill inventories defined before training. In open-ended settings where skill quality or relevance varies dynamically, or where skill-to-task mappings are uncertain, the curriculum gating mechanism may produce unstable retirement decisions.

\paragraph{Validation-level bottleneck.} Reliable internalization progress estimation requires sufficient validation coverage. In environments with extreme class imbalance or rare task types, the smoothed validation signal may become noisy, potentially triggering premature skill retirement.

\paragraph{Computational overhead.} Paired sampling incurs $\sim26\%$ early overhead relative to standard GRPO (Table~\ref{tab:cost}), which may be prohibitive for large-scale training despite the overhead declining as skills retire.
\bibliography{custom}

\appendix

\appendix

\begin{figure*}[t]
\centering
\includegraphics[width=0.95\textwidth]{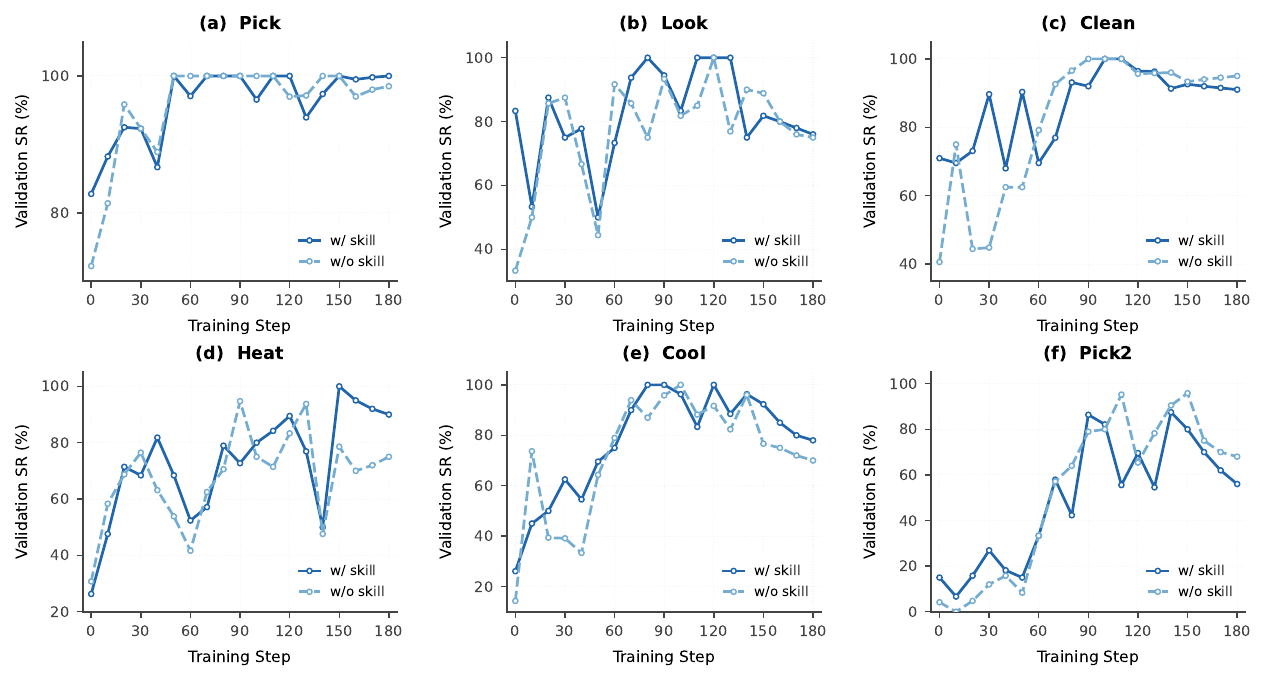}
\caption{Per-task training dynamics across six ALFWorld task categories.
Each subplot shows the with-skill (solid line) and without-skill (dashed line) validation success rates ($\mathrm{SR}_{\mathrm{with}}$ and $\mathrm{SR}_{\mathrm{without}}$) for one task type over 180 training steps.}
\label{fig:per_task_training_dynamics}
\end{figure*}

\section{Implementation Details}
\label{app:hyperparams}
Both ALFWorld and WebShop experiments use the same core CSCA hyperparameters, but differ in environment-specific configuration.

\paragraph{ALFWorld.}
We train on $8{\times}$A100-80\,GB GPUs with batch size~8 tasks/step,
group size $G{=}8$, and learning rate $1{\times}10^{-6}$.
\textsc{SkillC} samples four skill-injected and four skill-free paired rollouts per task,
runs validation every 10 steps (so $d{=}10$), and uses the capacity schedule $K^{(t)}\!\in\!\{6,3,0\}$.
This schedule reflects ALFWorld's six task categories; starting with all six active, then pruning to three midway through training, and finally to zero as skills are fully internalized.

\paragraph{WebShop.}
We use the same GPU configuration, batch size, and learning rate.
\textsc{SkillC} also samples paired rollouts and validates every 10 steps; however, because WebShop has a single unified shopping skill, there is no active-set pruning and $K^{(t)}\equiv 1$ throughout training.
The gate still controls attribution strength and rollout allocation, but these apply uniformly to the single skill category rather than selecting among multiple tasks.

\paragraph{Core CSCA hyperparameters (both environments).}
Key hyperparameters are $\lambda{=}1.0$ (attribution strength), $\omega{=}0.4$ (stream weight), 
$\delta{=}2.0$ (advantage clipping bound),
batch EMA decay $\alpha_{\mathrm{batch}}{=}0.9$, gate temperature $T{=}0.1$,
and validation-gate EMA $\alpha_{\mathrm{val}}{=}0.7$.
The minimal probing budget is $\rho_{\min}{=}0.1$, frequency-penalty weight is $\beta_{\mathrm{freq}}{=}0.01$,
and the retirement threshold is $\tau_{\mathrm{retire}}{=}0.0$.
In early optimization steps, the contrastive gate rises gradually as reliable
paired evidence accumulates, reflecting the contrastive learning signal rather
than a separate initialization trick.

\section{Per-Task Training Dynamics Analysis}
\label{app:per_task_analysis}


Figure~\ref{fig:per_task_training_dynamics} presents a comprehensive breakdown of the training dynamics across all six ALFWorld task categories over the full 180-step training horizon. All curves represent data collected from a single SkillC training run on the Qwen2.5-7B-Instruct base model.

\paragraph{Task-specific internalization patterns.}
The per-task breakdown reveals that internalization speed and baseline performance vary substantially across task categories.
High-performance tasks (Pick, Look, Clean) begin training with with-skill SR already near 80 to 90 percent, reflecting their inherent simplicity or close alignment with the base model's capabilities.
These tasks show rapid convergence and minimal internalization gap throughout the training run.
In contrast, lower-performance tasks (Heat, Cool, Pick2) start below 50 percent with-skill SR and exhibit larger internalization gaps in early training (steps 0 to 50).
For these harder tasks, the training from step 50 to 150 shows clear gap reduction as the policy learns to handle core task logic without skill assistance.

\paragraph{Stability post internalization.}
Figure~\ref{fig:per_task_training_dynamics} demonstrates that once the internalization gap closes (typically by step 100 to 120), both with-skill and without-skill curves stabilize and track closely together.
In the later training regime (steps 150 to 180), performance stabilizes at the learned level for most tasks, with minor fluctuations reflecting natural variance in the validation signal.
This convergence validates the core hypothesis that continuous contrastive feedback enables the policy to acquire autonomous competence gradually, rather than experiencing sudden performance degradation at a curriculum switch.
The fact that without-skill performance remains robust in the later stages (steps 130 to 180) confirms that SkillC has successfully internalized task knowledge rather than merely learning to suppress skill usage.

\section{Theoretical Justification}
\label{app:proofs}

This appendix provides theoretical justification for the dual-stream design of CSCA. We focus on the core asymptotic behavior to show that the contrastive correction term automatically decays as the policy internalizes skills, preventing permanent bias.

\subsection{Asymptotic Correction Decay}

We first formalize the limiting regime in which internalization is complete and the explicit contrastive correction should disappear automatically.

\begin{lemma}[Asymptotic Correction Decay]
\label{lem:decay}

Suppose $\pi_\theta$ has fully internalized skill $\mathcal{S}$, i.e., for all tasks $x$, states $s$, and actions $a$,
\[
P(a \mid s, x, \mathcal{S}; \pi_\theta)
=
P(a \mid s, x; \pi_\theta).
\]
Then $\Delta(x)=0$, and as $\hat{\Delta}(x)\to 0$, the \textsc{SkillC} estimator reduces to
\[
\hat{A}_\mathrm{CSCA}(i,x)
\xrightarrow{\hat{\Delta}\to 0}
A_\mathrm{ep}(i)
+
\omega\cdot
\frac{R_i-\mu^\mathrm{cond}(x)}{\sigma^\mathrm{cond}(x)+\epsilon},
\]
where $A_\mathrm{ep}(i)$ is the global ranking term, and $\mu^\mathrm{cond}(x), \sigma^\mathrm{cond}(x)$ are the within-condition statistics of the branch to which trajectory $i$ belongs.
\end{lemma}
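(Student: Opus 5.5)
The argument has two independent parts: first show that $\Delta(x)=0$, then substitute $\hat{\Delta}\to 0$ into the definition of $\hat{A}_\mathrm{CSCA}$ and take the limit.

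\emph{Step 1: $\Delta(x)=0$.} The reward $R$ of a rollout is a function of the trajectory $\tau=(s_0,a_0,s_1,\dots)$. Under either prompt condition, the law of $\tau$ factorizes as a product of the initial-state distribution (fixed by $x$), the environment transition kernel (independent of $z$), and the per-step action probabilities. By hypothesis, these action probabilities coincide for $z=1$ and $z=0$ at every state $s$. I will apply this equality in the conditional form $P(a\mid h,x,\mathcal{S})=P(a\mid h,x)$, identifying the state with the agent's history. An induction on the horizon then shows that the two trajectory distributions agree on every prefix, and therefore as measures. Consequently $\mathbb{E}[R\mid x,z=1;\pi_\theta]=\mathbb{E}[R\mid x,z=0;\pi_\theta]$, which gives $\Delta(x)=0$. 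The only subtlety is the history-versus-state issue just mentioned, and I will state that identification explicitly.

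\emph{Step 2: the correction vanishes.} By definition $C(x)=\lambda_\mathrm{eff}(x)[\hat{\Delta}(x)]_+$. Here $\lambda_\mathrm{eff}(x)\le\lambda_\mathrm{dyn}^{(t)}\le\lambda$, since the gate takes values in $(0,1)$ and $1+\beta_\mathrm{freq}n_x\ge 1$. The map $u\mapsto[u]_+$ is $1$-Lipschitz, so $|C(x)|\le\lambda|\hat{\Delta}(x)|\to 0$. For the remaining terms, the branch statistics $\mu^\pm(x),\sigma^\pm(x)$ and the episode statistics $\mu_\mathrm{ep},\sigma_\mathrm{ep}$ depend only on the sampled rewards and not on $\hat{\Delta}$. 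Therefore, for fixed rewards, the limit $\hat{\Delta}\to0$ acts only through $C(x)$.

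\emph{Step 3: identify the limit.} Taking the limit in the two-case definition of $A^\mathrm{cond}_\mathrm{contra}(i,x)$ gives $(R_i-\mu^{z_i}(x))/(\sigma^{z_i}(x)+\epsilon)$, where $\mu^{z_i},\sigma^{z_i}$ are the statistics of the branch containing $i$; this is exactly $(\mu^\mathrm{cond},\sigma^\mathrm{cond})$. Since $\Phi(R_i,\mathcal{R}_\mathrm{batch})=A_\mathrm{ep}(i)$ by definition, adding the two streams yields the stated limit. The convergence holds pointwise, and in fact uniformly over $i$, because the error is at most $\omega\lambda|\hat{\Delta}(x)|$.

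The main obstacle is the logical link between the two parts. The hypothesis gives the population statement $\Delta(x)=0$, whereas $\hat{\Delta}$ is a clipped EMA of finite-sample differences $\mu^+-\mu^-$, so it is not identically zero. I will therefore state the limit conditionally on $\hat{\Delta}\to0$, exactly as the lemma does. I will also remark that under full internalization each batch difference is an unbiased estimate of $\Delta(x)=0$, so the EMA concentrates near zero as samples accumulate. As a by-product, the limit formula shows that each branch's conditional advantages sum to approximately zero, consistent with the paper's description of Stream 2 as reducing to stratified normalization.
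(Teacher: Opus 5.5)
Your proposal is correct and follows the same route as the paper's proof: identical trajectory laws under both prompt conditions give $\Delta(x)=0$, then $C(x)=\lambda_{\mathrm{eff}}(x)[\hat{\Delta}(x)]_+\to 0$, and Stream~2 reduces to within-branch normalization. Your version adds rigor through the history-based induction and the explicit bound $|C(x)|\le\lambda|\hat{\Delta}(x)|$, and it keeps $\hat{\Delta}\to 0$ as a hypothesis, whereas the paper simply asserts that this follows from EMA smoothing. Only your closing side remark overreaches: with a fixed decay $\alpha_{\mathrm{batch}}$, the EMA of mean-zero batch differences retains a nonvanishing stationary variance rather than concentrating at zero (so $[\hat{\Delta}(x)]_+$ even has strictly positive mean), but since you state the limit conditionally on $\hat{\Delta}\to 0$, the proof does not rely on it.
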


\begin{proof}
If the policy fully internalizes $\mathcal{S}$, then by definition $P(a \mid s, x, \mathcal{S}; \pi_\theta) = P(a \mid s, x; \pi_\theta)$ for all tasks $x$, states $s$, and actions $a$. This implies that the trajectory distributions $\tau^+$ and $\tau^-$ become identical, so $\Delta(x) = \mathbb{E}[R \mid x, z=1] - \mathbb{E}[R \mid x, z=0] = 0$. Since $\hat{\Delta}(x)$ is a smoothed estimate via EMA, we have $\hat{\Delta}(x) \to 0$ as internalization progresses. Therefore, $C(x) = \lambda_\mathrm{eff}(x)[\hat{\Delta}(x)]_+ \to 0$.

By construction, the condition-wise advantage term is 
\[
A^\mathrm{cond}_\mathrm{contra}(i,x) = \frac{R_i - \mu^\mathrm{cond}(x)}{\sigma^\mathrm{cond}(x)+\epsilon} \pm C(x),
\]
where the sign is negative for skill-injected trajectories ($z_i=1$) and positive for skill-free ($z_i=0$). As $C(x) \to 0$, the correction terms $\pm C(x)$ vanish identically, leaving only the within-condition normalization:
\[
A^\mathrm{cond}_\mathrm{contra}(i,x) \to \frac{R_i-\mu^\mathrm{cond}(x)}{\sigma^\mathrm{cond}(x)+\epsilon}.
\]
Therefore, the full CSCA advantage approaches
\[
\hat{A}_\mathrm{CSCA}(i,x) \to A_\mathrm{ep}(i) + \omega \cdot \frac{R_i-\mu^\mathrm{cond}(x)}{\sigma^\mathrm{cond}(x)+\epsilon},
\]
which is the sum of a global ranking term and a stratified within-condition term. This completes the proof.
\end{proof}

\subsection{Design Implications}

The limiting form in Lemma~\ref{lem:decay} has three important implications for the CSCA design.

\paragraph{Automatic correction decay.}
The key insight is that the correction term $C(x) = \lambda_\mathrm{eff}(x)[\hat{\Delta}(x)]_+$ automatically decays to zero as the policy internalizes each skill, eliminating the risk of permanent bias. Unlike methods that apply fixed correction terms or external bonuses, CSCA ensures that its contrastive signal is transient and task-dependent, vanishing precisely when no longer needed.

\paragraph{Stratified advantage structure.}
At convergence, CSCA reduces to a decomposition of global ranking plus per-condition stratification. The stratified term provides variance reduction when the skill-injected and skill-free branches have heterogeneous reward distributions (e.g., $\sigma^+(x) \gg \sigma^{(\text{free})}(x)$ during early skill-injection phases), as it avoids inflating the baseline with high-reward skill-injected trajectories. This stratification is a natural consequence of independent within-condition normalization rather than a manually imposed mechanism.

\paragraph{Consistency with dual stream design.}
The two stream structure: one preserving global ranking, one applying per condition normalization, is justified by the limiting behavior. Stream~1 ensures that the policy maintains a coherent global quality signal across all trajectories, while Stream~2's correction is temporary and task conditioned. At internalization, Stream~2's contribution naturally vanishes, leaving only stratified advantage estimation, which is a well understood variance reduction technique in policy gradient methods.

\end{document}